%% file: main_short.tex
\documentclass[runningheads]{llncs}

\usepackage[T1]{fontenc}
\usepackage{amsmath,amssymb}
\usepackage{graphicx}
\usepackage{booktabs}
\usepackage{xcolor}
\usepackage{microtype}
\usepackage[section]{placeins}
\input{macros}

\title{Complexity Bounds and Approaches to Learning Projected Gradient Descent Solver Iterates}
\titlerunning{Complexity Bounds for Learning Solver Iterates}

\author{Anjian Li\inst{1} \and Ryne Beeson\inst{2}}
\authorrunning{A. Li and R. Beeson}
\institute{Department of Electrical and Computer Engineering \\
\and Department of Mechanical and Aerospace Engineering \\ Princeton University, Princeton, NJ 08544, USA\\
\email{anjianl@princeton.edu} and \email{ryne@princeton.edu}}

\begin{document}
\maketitle

\begin{abstract}
Data scarcity poses a fundamental challenge in training generative models to produce initial guesses for parametric optimization problems that are otherwise numerically expensive to solve. 
We therefore study a $k$-neighborhood data collection strategy that augments datasets of converged solutions with intermediate solver iterates, increasing the amount of training data without additional solver runs.
To understand the benefits of this approach, we derive a generalization bound based on Rademacher complexity that reveals the role of the $k$-neighborhoods and related parameters. 
To achieve this result, we focus on one-sided box-constrained quadratic programs solved by projected gradient descent.
We illustrate the behavior of this solver on two examples. 
The approach proposed in this paper enables a more capable DDDAS paradigm by improving the efficiency of the data-model-optimization loop. 
We finish by discussing two views of learning solver-iterate data and connect our analysis with GLENS, a new data-efficient global search method.

\keywords{Generative machine learning \and Rademacher complexity \and Projected gradient descent \and DDDAS \and Dynamic Data Driven Applications Systems \and InfoSymbiotic Systems}
\end{abstract}

\section{Introduction}\label{sec:introduction}

Data-driven global search aims to find a large collection of high-quality initial guesses for a family of parametric optimization problems.
A numerical solver can then refine these initial guesses toward local optima in a few iterations.
Generative models are useful in this setting because they can represent a distribution over multiple locally optimal solutions rather than a single deterministic solution.
Once trained, they can quickly generate diverse initial guesses for a new problem instance.
Prior work has used variational auto-encoders~\cite{li2023amortized,beeson2024global} and diffusion models~\cite{li2025diffusolve,li2024constraint,li2025aligning,Graebner:2025.jas.72.6,Graebner:2026.arxiv.jass} for this purpose in non-convex trajectory optimization.

The main limitation is the cost of collecting good training data.
Existing approaches generally keep only the final converged solution from each offline solver run.
Thus, producing a large dataset requires solving many problem instances with various initial guesses.
However, every solver run also produces a path of intermediate iterates, which is normally discarded.
These iterates are suboptimal but contain information about the local neighborhood of the optimum and require no additional cost to collect.
We therefore seek to leverage solver iterates to reduce the data requirement of generative learning.

Solver iterates may be viewed as points along a path toward an optimum or as one path embedded in a higher-dimensional space.
Because iterates from the same solver run are correlated, the analysis must account for their dependence, local spread, and convergence behavior.

This paper makes three contributions.
First, we formalize a definition of $k$-neighborhood data collection, in which the last $k+1$ iterates from each solver run are retained.
Second, for box-constrained quadratic programs (QPs), we show that projected gradient descent (PGD) contracts toward the optimum and use this behavior to derive a generalization bound with Rademacher complexity.
We illustrate this behavior when the optimum is interior or an upper bound is active.
Third, we discuss how these results inform methods that learn paths of solver data and their connection to the data-efficient global search method GLENS~\cite{li2026glens}.
From a DDDAS perspective, the solver supplies dynamically evolving data that is incorporated into a generative model that not only learns the behavior of the parameterized optimization problems and the dynamics of the numerical solver, but also provides accelerated future optimization and decision making.

The paper is organized as follows.
Section~\ref{sec:background} introduces the problem setting and background.
Section~\ref{sec:solver_iterates} defines the $k$-neighborhood and establishes the PGD contraction behavior.
Section~\ref{sec:generalization} develops the generalization bound with Rademacher complexity for box-constrained QPs, with examples shown in Section~\ref{sec:examples}.
Section~\ref{sec:discussion} discusses learning from solver iterates and GLENS. 
Section~\ref{sec:conclusion} concludes the paper.

\section{Problem Setup and Background}\label{sec:background}

\subsection{Parametric Optimization and Projected Gradient Descent}

We consider a family of parametric continuous optimization problems with the continuous decision variable $\vx\in\R^d$ and problem parameter $\vtheta\in\R^c$,
\begin{equation}\label{eq:parametric_problem}
\underset{\vx\in\R^d}{\mathrm{minimize}}\ f(\vx;\vtheta)
\quad\mathrm{subject\ to}\quad g(\vx;\vtheta)\le 0,\quad h(\vx;\vtheta)=0. \nonumber
\end{equation}
The objective function $f$ and constraint functions $g$ and $h$ have fixed forms across the problem family, while their values depend on $\vtheta$.

To solve this optimization problem, a numerical solver starts from an initial guess $\vx^0$ and produces a sequence of iterates $\vx^q$ that converges to a nearby locally optimal solution.
In this paper, we consider the one-sided box-constrained QP:
\begin{equation}\label{eq:box_qp}
\min_{\vx\in\R^d}\ f(\vx;\vtheta):=\frac{1}{2}\vx^T\mQ\vx-\vb^T\vx
\quad\text{subject to}\quad \vx\le\vu,
\end{equation}
where $\mQ\in\R^{d\times d} \succeq 0$, $\vb\in\R^d$, and $\vu\in\R^d$ denotes the upper bounds.
The problem parameter is $\vtheta=\operatorname{Flatten}(\mQ,\vb,\vu)\in\R^{d^2+2d}$.
The feasible set $\setC=\{\vx\in\R^d\mid\vx\le\vu\}$ is closed and convex, so problem~\eqref{eq:box_qp} has a unique minimizer $\vx^*(\vtheta)$.

For problem~\eqref{eq:box_qp}, PGD uses the single-step operator $T_{\vtheta}(\vx):=\ProjC{\vx-\stepsize\nabla f(\vx;\vtheta)}$,
where $\stepsize>0$ is the step size and $P_{\setC}$ is the Euclidean projection onto $\setC$.
Starting from an initial guess $\vx^0\in\setC$, PGD generates the sequence $\vx^q=T_{\vtheta}^q(\vx^0)$ for $q=0,1,\ldots$.
Let $\epsilon_{\mathrm{tol}}>0$ be a prescribed convergence tolerance.
We say that the solver converges after $n$ steps when the residual satisfies $\norm{T_{\vtheta}^{n-1}(\vx^0)-T_{\vtheta}^{n}(\vx^0)}_2\le\epsilon_{\mathrm{tol}}$.

\subsection{Generative Learning Framework}

The existing generative learning framework~\cite{li2023amortized,beeson2024global} uses a conditional generative model to learn the distribution of solutions $\vx$ given $\vtheta$.
Offline data collection produces converged solution--parameter pairs $\{(\vx_i^*,\vtheta_i)\}_{i=1}^N$, where $\vx_i^*:=\vx^*(\vtheta_i)$ is from solver run $i$.
Then, the generative model is trained to sample initial guesses that the solver refines to locally optimal solutions for a new problem parameter.

In this paper, we use a denoising diffusion probabilistic model (DDPM)~\cite{ho2020denoising} as an example because it has shown effectiveness in data-driven trajectory optimization~\cite{li2025diffusolve,li2024constraint}.
Let $t\in\{1,\ldots,T\}$ denote the diffusion step, let $(\alpha_t)_{t=1}^T$ with $\alpha_t\in(0,1)$ be a noise schedule, and define $\bar\alpha_t=\prod_{q=1}^t\alpha_q$.
At diffusion step $t$, the clean solution $\vx$ is perturbed as $\sqrt{\bar\alpha_t}\vx+\sqrt{1-\bar\alpha_t}\vepsilon$, where $\vepsilon\sim\mathcal{N}(0,I_d)$ is Gaussian noise.
We define the model input $\vz\in\R^{d+c+e}$ as $\vz=(\sqrt{\bar\alpha_t}\vx+\sqrt{1-\bar\alpha_t}\vepsilon,\vtheta,\vtau_t)$,
where $\vtau_t\in\R^e$ is a bounded embedding of the diffusion step.
A vector-valued noise predictor $g_{\vphi}:\R^{d+c+e}\to\R^d$, parameterized by $\vphi$, is trained with the loss
\begin{equation}\label{eq:diffusion_loss}
\ell(\vx,\vtheta,\vepsilon,t;\vphi)
=\norm{g_{\vphi}(\vz)-\vepsilon}_2.
\end{equation}
We assume that $g_{\vphi}$ belongs to the following two-layer neural network (NN) class:
\begin{equation}\label{eq:model_class}
\setG=\left\{\vz\mapsto\sum_{s=1}^{m}\va_s \psi(\vw_s^T\vz):
\sum_s\norm{\va_s}_2\le A,\ \norm{\vw_s}_2\le W\right\},
\end{equation}
where $m$ is the number of hidden units, $\va_s\in\R^d$ and $\vw_s\in\R^{d+c+e}$ are network weights with bounds $A>0$ and $W>0$.
The activation function $\psi:\R\to[0,1]$ satisfies $\psi(0)=0$ and is Lipschitz with constant $L_1>0$.
Hence $\norm{g_{\vphi}(\vz)}_2\le A$.

\section{Analysis of Solver-Iterate Data}\label{sec:solver_iterates}


We define a $k$-neighborhood to describe the solver iterates near a final converged solution, using the solver operator $T_{\vtheta}$ and convergence tolerance $\epsilon_{\mathrm{tol}}$ from Sec.~\ref{sec:background}.

\begin{definition}[$k$-neighborhood, radius, and path]\label{def:k_neighborhood}\label{def:k_neighborhood_radius}\label{def:k_neighborhood_path}
For a problem parameter $\vtheta$, a solver operator $T_{\vtheta}$, and $k \in \mathbb{N}_{+}$, the $k$-neighborhood is defined as the set of feasible initial points that converge after at most $k$ applications of $T_{\vtheta}$:
\begin{equation}\label{eq:k_neighborhood}
\mathcal{N}_{T_{\vtheta}}^k:=\left\{\vx\in\setC:\norm{T_{\vtheta}^{k-1}(\vx)-T_{\vtheta}^{k}(\vx)}_2\le\epsilon_{\mathrm{tol}}\right\}.
\end{equation}

We quantify its size by the radius $r_k(\vtheta)$, which is the maximum distance of the iterates to the corresponding optimum $\vx^*(\vtheta)$ and is assumed to be finite:
\begin{equation}\label{eq:radius}
r_k(\vtheta):=\sup_{\vx\in\mathcal{N}_{T_{\vtheta}}^k}\norm{\vx-\vx^*(\vtheta)}_2.
\end{equation}

Suppose solver run $i$ first satisfies the convergence criterion after $n_i\ge k$ steps.
The $k$-neighborhood path $\mathcal{S}^{\mathrm{solver}}_{k,i}$ is defined as the ordered sequence containing the final converged solution $T_{\vtheta_i}^{n_i}(\vx_i^0)$ and the preceding $k$ intermediate iterates:
\begin{equation}\label{eq:path}
\mathcal{S}^{\mathrm{solver}}_{k,i}
:=\bigl((T_{\vtheta_i}^{n_i-j}(\vx_i^0),\vtheta_i)\bigr)_{j=0}^{k}.
\end{equation}
\end{definition}



We next characterize the contraction behavior of PGD for box-constrained QPs.
Let $\lambda_{\max} \geq \lambda_{\min}>0$ be the largest and smallest eigenvalues of $\mQ$, and choose the step size as
\begin{equation}\label{eq:stepsize}
\stepsize=\frac{2}{\lambda_{\max}+\lambda_{\min}}.
\end{equation}

\begin{theorem}[Contractive behavior]\label{thm:contraction}
The PGD iterates for problem~\eqref{eq:box_qp} contract toward the optimum:
\begin{equation}\label{eq:contract}
\norm{T_{\vtheta}(\vx)-\vx^*(\vtheta)}_2
\le\rho\norm{\vx-\vx^*(\vtheta)}_2,
\qquad
\rho=\frac{\lambda_{\max}-\lambda_{\min}}{\lambda_{\max}+\lambda_{\min}}<1.
\end{equation}
\end{theorem}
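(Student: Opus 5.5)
The plan is the standard fixed-point argument for projected gradient descent. There are three ingredients: the optimum is a fixed point of $T_{\vtheta}$, the Euclidean projection onto a closed convex set is nonexpansive, and the gradient step map is a contraction whose factor is the spectral norm of $I-\stepsize\mQ$.

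\textbf{Step 1 (fixed point).} Since $\lambda_{\min}>0$, $f$ is strongly convex and $\setC$ is closed, convex and nonempty, so $\vx^*:=\vx^*(\vtheta)$ exists and is unique. The first-order optimality condition reads
\[
\langle\nabla f(\vx^*;\vtheta),\vy-\vx^*\rangle\ge0\quad\text{for all }\vy\in\setC .
\]
By the variational characterization of the projection, this is equivalent to $\vx^*=\ProjC{\vx^*-\stepsize\nabla f(\vx^*;\vtheta)}$ for any $\stepsize>0$. Hence $T_{\vtheta}(\vx^*)=\vx^*$.

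\textbf{Step 2 (nonexpansiveness of $P_{\setC}$).} For the box $\setC=\{\vx\le\vu\}$ the projection acts coordinatewise as $x_j\mapsto\min(x_j,u_j)$. Each coordinate map is $1$-Lipschitz, so $\norm{P_{\setC}(\vy)-P_{\setC}(\vy')}_2\le\norm{\vy-\vy'}_2$. The general firm-nonexpansiveness of projections onto convex sets gives the same conclusion. Combining with Step 1,
\[
\norm{T_{\vtheta}(\vx)-\vx^*}_2=\norm{P_{\setC}(\vx-\stepsize\nabla f(\vx))-P_{\setC}(\vx^*-\stepsize\nabla f(\vx^*))}_2\le\norm{(I-\stepsize\mQ)(\vx-\vx^*)}_2 .
\]
This uses $\nabla f(\vx)-\nabla f(\vx^*)=\mQ(\vx-\vx^*)$.

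\textbf{Step 3 (spectral bound).} The matrix $\mQ$ is symmetric, so $I-\stepsize\mQ$ is symmetric with eigenvalues $1-\stepsize\lambda_i$, where $\lambda_i\in[\lambda_{\min},\lambda_{\max}]$. Its operator norm is therefore
\[
\max\bigl(|1-\stepsize\lambda_{\min}|,\,|1-\stepsize\lambda_{\max}|\bigr).
\]
With $\stepsize=2/(\lambda_{\max}+\lambda_{\min})$ from \eqref{eq:stepsize}, direct substitution gives
\[
1-\stepsize\lambda_{\min}=\frac{\lambda_{\max}-\lambda_{\min}}{\lambda_{\max}+\lambda_{\min}},\qquad 1-\stepsize\lambda_{\max}=-\frac{\lambda_{\max}-\lambda_{\min}}{\lambda_{\max}+\lambda_{\min}} .
\]
Both have absolute value $\rho$, so $\norm{I-\stepsize\mQ}_2=\rho$. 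Moreover $\rho<1$ because $\lambda_{\min}>0$. Inserting this into Step 2 yields \eqref{eq:contract}.

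\textbf{Main obstacle.} The only delicate point is Step 1, because the contraction must be measured against $\vx^*$ itself rather than against an arbitrary second iterate. I would justify the fixed-point property through the projection inequality $\langle\vy-P_{\setC}(\vy),\vz-P_{\setC}(\vy)\rangle\le0$ for all $\vz\in\setC$, applied with $\vy=\vx^*-\stepsize\nabla f(\vx^*;\vtheta)$. I would also note that $\mQ\succ0$ (implied by $\lambda_{\min}>0$) is needed both for uniqueness of $\vx^*$ and for $\rho<1$.
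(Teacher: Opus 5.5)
Your proposal is correct and follows the paper's proof step for step. Like the paper, you use the fixed-point property $\mathbf{x}^*=P_{\mathcal{C}}(\mathbf{x}^*-\eta\nabla f(\mathbf{x}^*;\boldsymbol{\theta}))$, then nonexpansiveness of the projection to reduce to $\lVert (I-\eta Q)(\mathbf{x}-\mathbf{x}^*)\rVert_2$, then the identity $\lVert I-\eta Q\rVert_2=\rho$ for the chosen step size. The only difference is that you also justify the fixed-point condition via the projection variational inequality and compute the spectral norm from the eigenvalues $1-\eta\lambda_i$, whereas the paper simply asserts both.
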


\begin{proof}
The optimum satisfies the PGD fixed-point condition
$\vx^*(\vtheta)=\ProjC{\vx^*(\vtheta)-\stepsize\nabla f(\vx^*(\vtheta);\vtheta)}$.
By the non-expansiveness of projection onto a closed convex set,
\begin{align}
\norm{T_{\vtheta}(\vx)-\vx^*(\vtheta)}_2
&\le\norm{(\vx-\vx^*(\vtheta))-\stepsize(\nabla f(\vx;\vtheta)-\nabla f(\vx^*(\vtheta);\vtheta))}_2\\
&=\norm{(I_d-\stepsize\mQ)(\vx-\vx^*(\vtheta))}_2
\le\norm{I_d-\stepsize\mQ}_2\norm{\vx-\vx^*(\vtheta)}_2.
\end{align}
\end{proof}

With~\eqref{eq:stepsize}, $\norm{I_d-\stepsize\mQ}_2=\rho$.
Let $\rho_i$ denote the contraction factor for problem instance $i$.
Consider $\mathcal{S}^{\mathrm{solver}}_{k,i}$, whose earliest collected intermediate iterate is $T_{\vtheta_i}^{n_i-k}(\vx_i^0)$.
Since this point belongs to $\mathcal{N}_{T_{\vtheta_i}}^k$, Definition~\ref{def:k_neighborhood_radius} gives $\norm{T_{\vtheta_i}^{n_i-k}(\vx_i^0)-\vx^*(\vtheta_i)}_2\le r_k(\vtheta_i)$.
Applying the contraction bound for the next $k-j$ steps yields
\begin{equation}\label{eq:k_path_contraction}
\norm{T_{\vtheta_i}^{n_i-j}(\vx_i^0)-\vx^*(\vtheta_i)}_2
\le \rho_i^{k-j}r_k(\vtheta_i),
\qquad j=0,\ldots,k.
\end{equation}
Thus, all points in $\mathcal{S}^{\mathrm{solver}}_{k,i}$ lie in regions around the optimum bounded by $r_k(\vtheta_i)$.

\section{Generalization Bounds Based on Rademacher Complexity}\label{sec:generalization}\label{sec:pgd_complexity}

Generalization bounds quantify the difference between training and expected performance.
PAC-Bayes theory evaluates the Kullback--Leibler divergence between a prior and posterior distribution over predictors (e.g., the NN~\cite{mcallester1998some}).
Conformal prediction provides uncertainty quantification after training with finite-sample guarantees and without any assumptions about the distribution~\cite{angelopoulos2023conformal}.
However, our goal is to analyze how solver-iterate data affects the complexity of learning.
Rademacher complexity measures how well a function class can fit random signs on the observed sample~\cite{mohri2018foundations}.
It allows the input radius induced by the solver to appear in the bound and is therefore the most natural choice for our analysis.

We first present a standard Rademacher complexity result, and then extend it to $k$-neighborhood paths.
Let $s=(\vx,\vtheta,\vepsilon,t) \sim \mathcal{D}$ denote one training sample, and define $\ell_{\vphi}(s):=\ell(\vx,\vtheta,\vepsilon,t;\vphi)$ using~\eqref{eq:diffusion_loss}.
The corresponding loss function class is
$\setL:=\{s\mapsto\ell_{\vphi}(s):g_{\vphi}\in\setG\}$,
where $\setG$ is the NN class in~\eqref{eq:model_class}.
For $N$ independent samples $S=\{s_i\}_{i=1}^N$ drawn from $\mathcal{D}$, we define the empirical and true risks as
\begin{equation}\label{eq:risk_definitions}
\hatE(\vphi):=\frac{1}{N}\sum_{i=1}^N\ell_{\vphi}(s_i),
\qquad
\Etrue(\vphi):=\mathbb{E}_{s\sim\mathcal{D}}[\ell_{\vphi}(s)].
\end{equation}
The empirical Rademacher complexity of $\setL$ on $S$ is
\begin{equation}\label{eq:rad_definition}
\hatR_S(\setL)=\mathbb{E}_{\boldsymbol{\sigma}}\left[
\sup_{\ell_{\vphi}\in\setL}\frac{1}{N}\sum_{i=1}^N\sigma_i\ell_{\vphi}(s_i)\right],
\quad \sigma_i\overset{\mathrm{i.i.d.}}{\sim}\mathrm{Uniform}\{-1,+1\}.
\end{equation}

\begin{theorem}[Standard Rademacher generalization bound~\cite{mohri2018foundations}]\label{thm:standard_gen}
Assume that $\ell_{\vphi}(s)\in[0,B_\ell]$ for every $\ell_{\vphi}\in\setL$.
For any $\delta\in(0,1)$, with probability at least $1-\delta$ over the draw of $S\sim\mathcal{D}^N$,
\begin{equation}\label{eq:standard_gen}
\sup_{\vphi}\bigl(\Etrue(\vphi)-\hatE(\vphi)\bigr)
\le 2\hatR_S(\setL)+3B_\ell\sqrt{\frac{\log(2/\delta)}{2N}}.
\end{equation}
\end{theorem}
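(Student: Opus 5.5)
The plan is to run the classical symmetrization argument combined with McDiarmid's bounded-differences inequality, in the form given in~\cite{mohri2018foundations}. Define the uniform deviation $\Phi(S):=\sup_{\vphi}\bigl(\Etrue(\vphi)-\hatE(\vphi)\bigr)$. The proof controls $\Phi(S)$ in two stages. First we show that it concentrates around its mean $\mathbb{E}_S[\Phi(S)]$. Then we bound that mean by the expected Rademacher complexity, and finally we convert that expectation into the empirical quantity $\hatR_S(\setL)$, paying a second concentration term. Each concentration step uses confidence $\delta/2$, and a union bound combines them.

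For the first stage, replace a single sample $s_i$ by $s_i'$. Since every $\ell_{\vphi}$ takes values in $[0,B_\ell]$, each empirical average $\hatE(\vphi)$ changes by at most $B_\ell/N$. Taking the supremum over $\vphi$ preserves this bound, so $\Phi(S)$ also changes by at most $B_\ell/N$. McDiarmid's inequality then gives, with probability at least $1-\delta/2$,
\[
\Phi(S)\le\mathbb{E}_S[\Phi(S)]+B_\ell\sqrt{\frac{\log(2/\delta)}{2N}}.
\]

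For the second stage, introduce an independent ghost sample $S'\sim\mathcal{D}^N$ and write $\Etrue(\vphi)=\mathbb{E}_{S'}[\hatE_{S'}(\vphi)]$. Jensen's inequality lets us pull the supremum inside this expectation. The pairs $(s_i,s_i')$ are exchangeable, so we may insert Rademacher signs $\sigma_i$ without changing the distribution. Splitting the resulting supremum into its $S$ part and its $S'$ part yields
\[
\mathbb{E}_S[\Phi(S)]\le 2\,\mathbb{E}_S[\hatR_S(\setL)].
\]
Next, $\hatR_S(\setL)$ itself has bounded differences $B_\ell/N$. A second application of McDiarmid therefore gives, with probability at least $1-\delta/2$,
\[
\mathbb{E}_S[\hatR_S(\setL)]\le\hatR_S(\setL)+B_\ell\sqrt{\frac{\log(2/\delta)}{2N}}.
\]
Multiplying this by $2$ and adding it to the first bound, the union bound produces the constant $1+2=3$ in front of $B_\ell\sqrt{\log(2/\delta)/(2N)}$, which is exactly~\eqref{eq:standard_gen}.

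The main obstacle is the symmetrization step. There we must justify the sign-flip invariance, which relies on the samples being i.i.d.; this assumption is precisely what will later fail for correlated $k$-neighborhood paths. We must also handle measurability of the supremum, which I would dispatch by assuming $\setL$ is countable or separable. A further point is that the bounded-differences constant for $\hatR_S(\setL)$ must be $B_\ell/N$ rather than $2B_\ell/N$. This holds because $\sigma_i\ell_{\vphi}(s_i)$ lies in an interval of length $B_\ell$ for each fixed sign, so swapping $s_i$ moves the value by at most $B_\ell/N$.
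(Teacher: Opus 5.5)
Your proposal is correct and is exactly the McDiarmid--symmetrization--McDiarmid argument of Mohri et al.\ (their Theorem~3.3, rescaled from $[0,1]$ to $[0,B_\ell]$), with the constant $1+2=3$ and the $\delta/2$ split in the right places; the paper itself gives no proof and relies on this cited result. As a side note, your closing remark about correlated iterates does not actually arise in the paper: Theorem~\ref{thm:k_bound} treats each whole $k$-neighborhood path as a single i.i.d.\ sample, so the i.i.d.\ assumption is preserved at the level of paths.
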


For the theoretical analysis, the diffusion noise is assumed to satisfy $\norm{\vepsilon}_2\le\sqrt d$ almost surely.
Since $\norm{g_{\vphi}(\vz_i)}_2\le A$, the triangle inequality gives $B_\ell=A+\sqrt d$.
Let $R_x:=\max_i\norm{\vx_i}_2$, $R_\theta:=\max_i\norm{\vtheta_i}_2$, and $R_\tau:=\max_i\norm{\vtau_{t_i}}_2$.
Then the model inputs satisfy $\norm{\vz_i}_2\le R_z$, where
$R_z:=R_x+R_\theta+R_\tau+\sqrt d$.
The map $\vy\mapsto\norm{\vy-\vepsilon_i}_2$ is $1$-Lipschitz, so the vector-contraction inequality~\cite[Corollary 4]{maurer2016vector} gives
$\hatR_S(\setL)
\le\frac{\sqrt{2}}{N}\mathbb{E}_{\sigma_{ir}}
\left[\sup_{g_{\vphi}\in\setG}\sum_{i=1}^N\sum_{r=1}^d
\sigma_{ir}g_{\vphi,r}(\vz_i)\right]$,
where $\sigma_{ir}$ are independent Rademacher variables and $g_{\vphi,r}$ is the $r$th component of $g_{\vphi}$.
The weight bound in~\eqref{eq:model_class} together with scalar contraction, the Efron--Stein inequality, and Jensen's inequality bounds the Rademacher complexity as
\begin{equation}\label{eq:base_rad}
\hatR_S(\setL)\le\sqrt{12}A L_1 W\frac{\sqrt d}{\sqrt N}R_z.
\end{equation}

For the box-constrained QP in~\eqref{eq:box_qp} with diffusion models in Sec.~\ref{sec:background}, the $j$th iterate in $\mathcal{S}^{\mathrm{solver}}_{k,i}$ becomes
$\vz_{i,j}:=\left(\sqrt{\bar\alpha_{t_i}}T_{\vtheta_i}^{n_i-j}(\vx_i^0)
+\sqrt{1-\bar\alpha_{t_i}}\vepsilon_{i,j},\vtheta_i,\vtau_{t_i}\right)$, with the training data $\mathcal{S}^{\mathrm{train}}_{k,i}:=\bigl(\vz_{i,j}\bigr)_{j=0}^k$.
We then define the discounted average loss:
\begin{equation}\label{eq:path_loss}
J_i(\vphi)=\frac{1}{k+1}\sum_{j=0}^{k}\gamma^j
\ell(T_{\vtheta_i}^{n_i-j}(\vx_i^0),\vtheta_i,\vepsilon_{i,j},t_i;\vphi),
\qquad 0<\gamma\le1.
\end{equation}
The diffusion noise $\vepsilon_{i,j}$ is associated with iterate $j$, and $t_i$ is the sampled diffusion step.
$\gamma<1$ gives smaller weights to earlier iterates.
The $k$-neighborhood path loss class is defined as
$\setJ:=\{\mathcal{S}^{\mathrm{train}}_{k,i}\mapsto J_i(\vphi):g_{\vphi}\in\setG\}$.
In the empirical and true risk definitions in~\eqref{eq:risk_definitions}, each sample $s_i$ is replaced by $\mathcal{S}^{\mathrm{train}}_{k,i}$ and each pointwise loss $\ell_{\vphi}$ is replaced by $J_i$.
The path loss is bounded by $B_J:=B_\ell$.

Let $R_{x^*}=\max_i\norm{\vx^*(\vtheta_i)}_2$, $R_\theta=\max_i\norm{\vtheta_i}_2$, and $R_\tau=\max_i\norm{\vtau_{t_i}}_2$.
Let $r_k=\max_i r_k(\vtheta_i)$ and let $\rho=\max_i\rho_i<1$ be a uniform contraction factor.
The contraction bound~\eqref{eq:k_path_contraction} and the triangle inequality give
$
\norm{T_{\vtheta_i}^{n_i-j}(\vx_i^0)}_2
\le R_{x^*}+\rho^{k-j}r_k$.
Define $R_{z,j}:=\max_i\norm{\vz_{i,j}}_2$.
The $j$th input to the diffusion model satisfies
\begin{equation}\label{eq:z_radius}
R_{z,j}\le R_0+\rho^{k-j}r_k, \quad \text{where} \quad R_0:=R_{x^*}+R_\theta+R_\tau+\sqrt d.
\end{equation}

\begin{theorem}[PGD generalization bound]\label{thm:k_bound}
Let the $k$-neighborhood paths collected in training be i.i.d.
Assume that $n_i\ge k$ for all $i$, $\norm{\vepsilon_{i,j}}_2\le\sqrt d$ for all $i$ and $j$, $r_k<\infty$, and $\rho<1$.
For any $\delta\in(0,1)$, with probability at least $1-\delta$,
\begin{align}
\sup_{\vphi}\bigl(\Etrue(\vphi)-\hatE(\vphi)\bigr)
&\le 2\hatR_{\mathcal{S}^{\mathrm{train}}_k}(\setJ)+3B_J\sqrt{\frac{\log(2/\delta)}{2N}},\label{eq:k_gen}\\
\hatR_{\mathcal{S}^{\mathrm{train}}_k}(\setJ)
&\le \sqrt{12}A L_1W\frac{\sqrt d}{\sqrt N}
\left[\frac1{k+1}\sum_{j=0}^{k}\gamma^j
\left(R_0+\rho^{k-j}r_k\right)\right].\label{eq:k_rad}
\end{align}
Here, $\mathcal{S}^{\mathrm{train}}_k:=\{\mathcal{S}^{\mathrm{train}}_{k,i}\}_{i=1}^N$ is the collection of $k$-neighborhood paths used for training.
\end{theorem}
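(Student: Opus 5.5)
The proof splits into two parts, one for each inequality in the statement.

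For \eqref{eq:k_gen}, we apply Theorem~\ref{thm:standard_gen} directly, treating each $k$-neighborhood path $\mathcal{S}^{\mathrm{train}}_{k,i}$ as a single sample and $J_i$ as the loss. Two conditions need checking. First, the paths are i.i.d. by assumption, so any dependence among iterates inside one path is irrelevant: the sample is the whole path. Second, the loss must be bounded. Each pointwise loss lies in $[0,A+\sqrt d]$, by the triangle inequality with $\norm{g_{\vphi}}_2\le A$ and $\norm{\vepsilon_{i,j}}_2\le\sqrt d$. Since $\gamma^j\le 1$, the average $J_i=\frac{1}{k+1}\sum_j\gamma^j\ell$ lies in $[0,B_\ell]=[0,B_J]$. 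McDiarmid plus symmetrization, as in Theorem~\ref{thm:standard_gen}, then gives \eqref{eq:k_gen} with $N$ equal to the number of paths.

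For \eqref{eq:k_rad}, we first decompose the supremum over the weighted sum. Write $\hatR(\setJ)=\mathbb{E}_\sigma\sup_{\vphi}\frac1N\sum_i\sigma_i\frac{1}{k+1}\sum_j\gamma^j\ell_{\vphi}(s_{i,j})$. Exchanging the sum over $j$ with the supremum uses $\sup\sum\le\sum\sup$, giving
\[
\hatR(\setJ)\le\frac1{k+1}\sum_{j=0}^k\gamma^j\,\mathbb{E}_\sigma\sup_{\vphi}\frac1N\sum_{i=1}^N\sigma_i\ell_{\vphi}(s_{i,j}).
\]
This pulls out the factor $\gamma^j\ge0$ and reduces the problem to $k+1$ ordinary Rademacher complexities $\hatR_{S_j}(\setL)$. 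Each of these is computed on the sample $S_j=\{s_{i,j}\}_{i=1}^N$, which collects the $j$th iterate of every path.

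Next we apply \eqref{eq:base_rad} to each $S_j$ separately. Its derivation (vector contraction of~\cite{maurer2016vector}, then the two-layer network bound) depends on the data only through $\max_i\norm{\vz_i}_2$, so for $S_j$ we may replace $R_z$ by $R_{z,j}$. This gives $\hatR_{S_j}(\setL)\le\sqrt{12}AL_1W\sqrt{d/N}\,R_{z,j}$. Substituting \eqref{eq:z_radius}, which comes from the contraction estimate \eqref{eq:k_path_contraction} and hence from Theorem~\ref{thm:contraction}, yields $R_{z,j}\le R_0+\rho^{k-j}r_k$. Summing over $j$ with weights $\gamma^j/(k+1)$ produces \eqref{eq:k_rad}.

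The main obstacle is justifying the per-$j$ use of \eqref{eq:base_rad}, because the bound depends on the input radius. The diffused input's $\vx$-component satisfies $\norm{\sqrt{\bar\alpha_t}\vx+\sqrt{1-\bar\alpha_t}\vepsilon}_2\le\norm{\vx}_2+\sqrt d$, since both coefficients are at most $1$. Its $\vx$-norm is at most $R_{x^*}+\rho^{k-j}r_k$, and the $\vtheta$- and $\vtau$-components contribute $R_\theta$ and $R_\tau$. Concatenation gives a norm bounded by the sum of the component norms, which confirms $R_{z,j}\le R_0+\rho^{k-j}r_k$ uniformly in $i$. We must also check that \eqref{eq:base_rad} holds for an arbitrary fixed sample, so that it can be applied conditionally on each $S_j$; this is so because it is an empirical, data-dependent bound.
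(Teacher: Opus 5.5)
Your proposal is correct and follows the paper's proof essentially step for step. You apply Theorem~\ref{thm:standard_gen} to the $N$ i.i.d.\ path samples with bound $B_J=B_\ell$, use subadditivity ($\sup\sum\le\sum\sup$) to split $\setJ$ into $k+1$ per-iterate complexities scaled by $\gamma^j$, apply \eqref{eq:base_rad} to each with the radius bound \eqref{eq:z_radius}, and sum. Your extra checks fill in details the paper leaves implicit: that $J_i\in[0,B_J]$, the per-$j$ input radius via the triangle inequality, and that \eqref{eq:base_rad} is a sample-wise bound that can be applied to each $S_j$.
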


\begin{proof}
Apply Theorem~\ref{thm:standard_gen} to the $N$ independent $k$-neighborhood path samples with loss class $\setJ$ and bound $B_J$.
By subadditivity of empirical Rademacher complexity, the complexity of the average in~\eqref{eq:path_loss} is at most the average complexity of its $k+1$ component losses.
For each $j$, we apply~\eqref{eq:base_rad}, scale by $\gamma^j$, and substitute~\eqref{eq:z_radius}.
Summing over $j=0,\ldots,k$ gives~\eqref{eq:k_rad}.
\end{proof}

The bound scales as $O(N^{-1/2})$ and improves as we increase the number of independent solver runs.
The value of $k$ enters through the average radius and discount terms, while $\rho^{k-j}r_k$ captures the larger variation of earlier iterates.
This bound can be conservative because it uses a norm-bounded NN, the worst-case radius $r_k$, and a uniform contraction factor.

\section{Examples of PGD Trajectories}\label{sec:examples}

We illustrate PGD solver iterates with different upper bounds using two instances of the two-dimensional box-constrained QP in~\eqref{eq:box_qp}.
Both instances use
$\mQ=\left[\begin{smallmatrix}4&1.5\\1.5&1\end{smallmatrix}\right]$,
$\vb=(2,1.5)^T$, and the initial guess $\vx^0=(-1.5,-1)^T$, but use different upper bounds $\vu$.
We set $\stepsize=2/(\lambda_{\max}+\lambda_{\min})=0.4$ as in~\eqref{eq:stepsize} and $\epsilon_{\mathrm{tol}}=10^{-5}$.

Figure~\ref{fig:pgd_trajectories}(a) uses $\vu=(4,4)^T$, for which the optimum lies in the interior of the feasible set.
The iterates alternate across the optimum while contracting toward it.
Figure~\ref{fig:pgd_trajectories}(b) uses $\vu=(0.5,0.8)^T$, for which the second upper bound is active at the optimum.
Projection first maps the iterate to the boundary, after which the remaining iterates contract along the boundary toward the optimum.
Thus, box constraints can change the path of solver iterates, but later iterates always lie in smaller regions around the optimum as described in Sec.~\ref{sec:solver_iterates}.

\begin{figure}[t]
\centering
\includegraphics[width=0.9\textwidth]{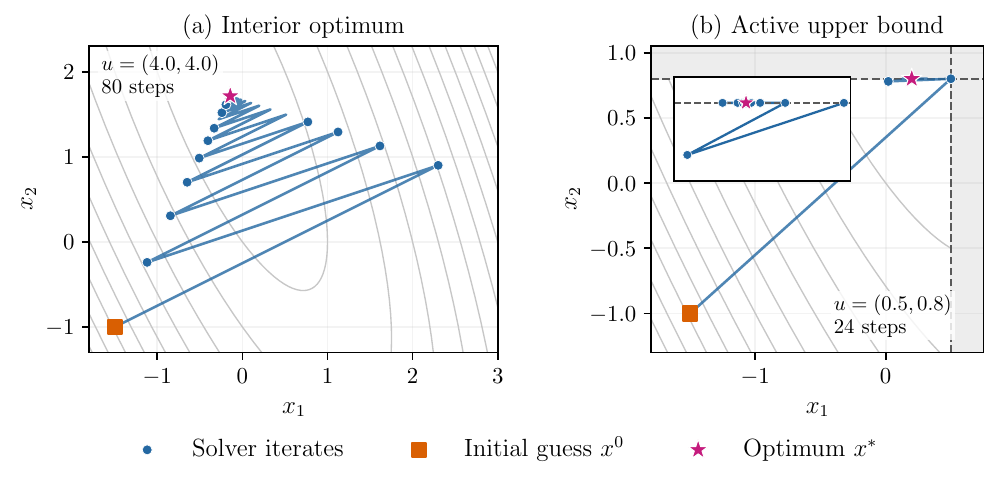}
\caption{PGD solver iterates for the same QP objective and initial guess under two upper box constraints.
The markers show the solver iterates, the dashed lines indicate the upper bounds, and the shaded region is infeasible.
The inset in (b) shows the refinement along the active boundary near the optimum.
}
\label{fig:pgd_trajectories}
\end{figure}

\section{Discussion on Learning from Solver Iterates}
\label{sec:discussion}

Training data augmented with intermediate solver iterates can reduce the data requirements of generative learning for global search.
These iterates can be viewed in two ways: as individual points along an ordered $k$-neighborhood path or as one path embedded in a higher-dimensional space.
The pointwise view can condition on the iteration index or distance to the converged solution, while the pathwise view can learn the dependence among iterates in the same solver run.

These views motivate our work GLENS (Global Search via Learning from Solver Iterates)~\cite{li2026glens}, which learns from $k$-neighborhood paths using diffusion models with two components.
The neighborhood structure model learns the local geometry around converged solutions, while the solver behavior model learns refinement directions that guide generated samples toward nearby optima during diffusion sampling.
The presented generalization bound based on Rademacher complexity provides a theoretical foundation for GLENS, the corresponding method for more data-efficient global search.

\section{Conclusion}
\label{sec:conclusion}

To reduce the data requirements of generative learning for global search, we propose the $k$-neighborhood as the iterates near final converged solutions and establish the contraction behavior of PGD for one-sided box-constrained QPs.
A generalization bound with Rademacher complexity is derived that captures how the number of solver runs, contraction factor, neighborhood radius, and discount factor affect the learning performance.
With the pointwise and pathwise views of learning solver iterates, our work builds a foundation for the data-efficient global search method GLENS~\cite{li2026glens}.
Together, these results connect solver data, numerical optimization, and generative learning within the DDDAS paradigm.
Future work will develop tighter NN bounds, directly model dependence of solver iterates, and extend the analysis to unbounded Gaussian diffusion noise.

\begin{credits}
\subsubsection{\ackname}
The authors acknowledge partial support during the course of this effort from the Princeton Laboratory for Artificial Intelligence’s [PLI/AI\textsuperscript{2}/NAM] initiative and the Air Force Office of Scientific Research under Grant \#FA9550-25-1-0100.
The authors thank Bartolomeo Stellato for a number of helpful conversations regarding the projected gradient descent algorithm during the course of the work. 
\end{credits}

\bibliographystyle{splncs04}
\bibliography{main_short.bbl}

\end{document}

%% file: macros.tex
\newcommand{\R}{\mathbb{R}}

\newcommand{\vx}{\mathbf{x}}
\newcommand{\vb}{\mathbf{b}}
\newcommand{\vy}{\mathbf{y}}
\newcommand{\vz}{\mathbf{z}}
\newcommand{\vtheta}{\boldsymbol{\theta}}
\newcommand{\vphi}{\boldsymbol{\phi}}
\newcommand{\va}{\boldsymbol{a}}
\newcommand{\vw}{\boldsymbol{w}}
\newcommand{\vu}{\boldsymbol{u}}
\newcommand{\vepsilon}{\boldsymbol{\epsilon}}
\newcommand{\vtau}{\boldsymbol{\tau}}
\newcommand{\mQ}{Q}

\newcommand{\stepsize}{\eta}
\newcommand{\setC}{\mathcal{C}}
\newcommand{\setG}{\mathcal{G}}
\newcommand{\setL}{\mathcal{L}}
\newcommand{\setJ}{\mathcal{J}}
\newcommand{\hatR}{\widehat{\mathfrak{R}}}
\newcommand{\hatE}{\widehat{\mathcal{E}}}
\newcommand{\Etrue}{\mathcal{E}}
\newcommand{\ProjC}[1]{P_{\mathcal{C}}(#1)}
\newcommand{\norm}[1]{\lVert #1 \rVert}

%% file: main_short.bbl
\begin{thebibliography}{10}
\providecommand{\url}[1]{\texttt{#1}}
\providecommand{\urlprefix}{URL }
\providecommand{\doi}[1]{https://doi.org/#1}

\bibitem{angelopoulos2023conformal}
Angelopoulos, A.N., Bates, S.: Conformal prediction: A gentle introduction. Foundations and Trends in Machine Learning  \textbf{16}(4),  494--591 (2023)

\bibitem{beeson2024global}
Beeson, R., Li, A., Sinha, A.: Global search of optimal spacecraft trajectories using amortization and deep generative models. arXiv preprint arXiv:2412.20023  (2024)

\bibitem{Graebner:2025.jas.72.6}
Graebner, J., Beeson, R.: Global search for optimal low thrust spacecraft trajectories using diffusion models and the indirect method. The Journal of the Astronautical Sciences  \textbf{72}(6), ~62 (2025). \doi{10.1007/s40295-025-00535-1}

\bibitem{Graebner:2026.arxiv.jass}
Graebner, J., Beeson, R.: Transfer learning of multiobjective indirect low-thrust trajectories using diffusion models and markov chain monte carlo (5 2026). \doi{10.48550/arXiv.2605.09125}

\bibitem{ho2020denoising}
Ho, J., Jain, A., Abbeel, P.: Denoising diffusion probabilistic models. Advances in neural information processing systems  \textbf{33},  6840--6851 (2020)

\bibitem{li2025aligning}
Li, A., Beeson, R.: Aligning diffusion model with problem constraints for trajectory optimization. In: Blasch, E., Darema, F., Metaxas, D. (eds.) Handbook of Dynamic Data-Driven Applications Systems, vol.~4. Springer, Cham (2025), preprint available as arXiv:2504.00342

\bibitem{li2025diffusolve}
Li, A., Ding, Z., Dieng, A.B., Beeson, R.: {DiffuSolve}: Diffusion-based solver for non-convex trajectory optimization. In: Ozay, N., Balzano, L., Panagou, D., Abate, A. (eds.) Proceedings of the 7th Annual Learning for Dynamics \& Control Conference. Proceedings of Machine Learning Research, vol.~283, pp. 45--58. PMLR (4--6 June 2025)

\bibitem{li2024constraint}
Li, A., Ding, Z., Dieng, A.B., Beeson, R.: Constraint-aware diffusion models for trajectory optimization. In: Blasch, E., Darema, F., Metaxas, D. (eds.) Dynamic Data Driven Applications Systems. pp. 308--316. Springer Nature Switzerland, Cham (2026)

\bibitem{li2023amortized}
Li, A., Sinha, A., Beeson, R.: Amortized global search for efficient preliminary trajectory design with deep generative models. In: AAS/AIAA Astrodynamics Specialist Conference. Big Sky, MT (August 2023), paper AAS 23-352

\bibitem{li2026glens}
Li, A., Stellato, B., Beeson, R.: Glens: Global search via learning from solver iterates with diffusion models. arXiv preprint arXiv:2606.00366  (2026)

\bibitem{maurer2016vector}
Maurer, A.: A vector-contraction inequality for rademacher complexities. In: International Conference on Algorithmic Learning Theory. pp. 3--17. Springer (2016)

\bibitem{mcallester1998some}
McAllester, D.A.: Some pac-bayesian theorems. In: Proceedings of the eleventh annual conference on Computational learning theory. pp. 230--234 (1998)

\bibitem{mohri2018foundations}
Mohri, M., Rostamizadeh, A., Talwalkar, A.: Foundations of Machine Learning. MIT Press, Cambridge, MA, 2 edn. (2018)

\end{thebibliography}
